\renewcommand\footnotetextcopyrightpermission[1]{} 
\begin{document}

\title{HyLiFormer: Hyperbolic Linear Attention for Skeleton-based Human Action Recognition}

\author{Yue Li}
\affiliation{%
  \institution{Sun Yat-sen University}
  \city{Shenzhen}
  \country{China}
}
\email{liyue228@mail2.sysu.edu.cn}

\author{Haoxuan Qu}
\affiliation{%
  \institution{Lancaster University}
  \city{Lancaster}
  \country{United Kingdom}
}
\email{h.qu5@lancaster.ac.uk}

\author{Mengyuan Liu}
\affiliation{%
  \institution{Peking University}
  \city{Shenzhen}
  \country{China}
}
\email{liumengyuan@pku.edu.cn}

\author{Jun Liu}
\affiliation{%
  \institution{Lancaster University}
  \city{Lancaster}
  \country{United Kingdom}
}
\email{j.liu81@lancaster.ac.uk}

\author{Yujun Cai}
\affiliation{%
  \institution{University of Queensland}
  \city{Brisbane}
  \country{Australia}
}
\email{vanora.caiyj@gmail.com}


\begin{abstract}
Transformers have demonstrated remarkable performance in skeleton-based human action recognition, yet their quadratic computational complexity remains a bottleneck for real-world applications. To mitigate this, linear attention mechanisms have been explored but struggle to capture the hierarchical structure of skeleton data. Meanwhile, the Poincaré model, as a typical hyperbolic geometry, offers a powerful framework for modeling hierarchical structures but lacks well-defined operations for existing mainstream linear attention. In this paper, we propose HyLiFormer, a novel hyperbolic linear attention Transformer tailored for skeleton-based action recognition. Our approach incorporates a Hyperbolic Transformation with Curvatures (HTC) module to map skeleton data into hyperbolic space and a Hyperbolic Linear Attention (HLA) module for efficient long-range dependency modeling. Theoretical analysis and extensive experiments on NTU RGB+D and NTU RGB+D 120 datasets demonstrate that HyLiFormer significantly reduces computational complexity while preserving model accuracy, making it a promising solution for efficiency-critical applications.
\end{abstract}



\keywords{Action recognition; Hyperbolic geometry; Linear self-attention}


\maketitle

\section{Introduction}
Skeleton-based human action recognition (HAR) is a fundamental task in computer vision that aims to classify human actions from a sequence of time-continuous skeleton points. This task has found widespread applications in various domains  \citep{ren2024survey}, such as sports analytics, video surveillance, and human-computer interaction. 

Early HAR methods primarily relied on Convolutional Neural Networks (CNNs) to extract spatial features \citep{wang2018action}, while Recurrent Neural Networks (RNNs) were introduced to model temporal dynamics \citep{du2015hierarchical}.
\begin{figure}[h]
    \centering
	\begin{minipage}{0.65\linewidth}
        \centering
		\vspace{3pt}
		\centerline{\includegraphics[width=0.9\textwidth]{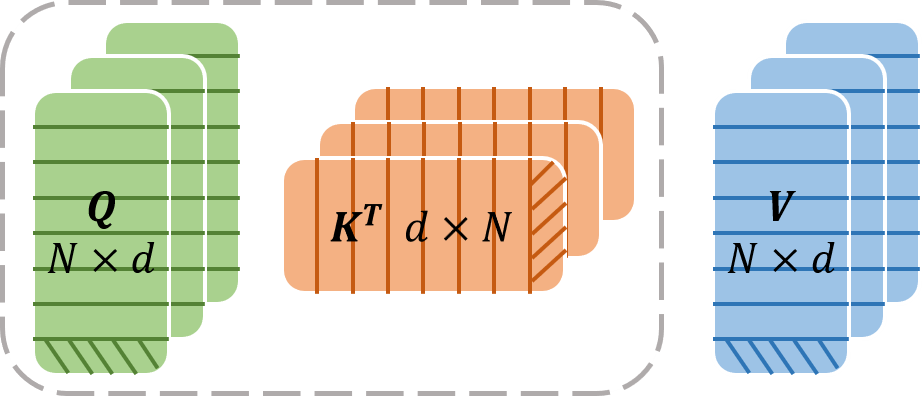}}
        \centerline{(a) Softmax Attention $\mathcal{O}(N^2d)$}
			\vspace{3pt}
        \centering
		\centerline{\includegraphics[width=0.9\textwidth]{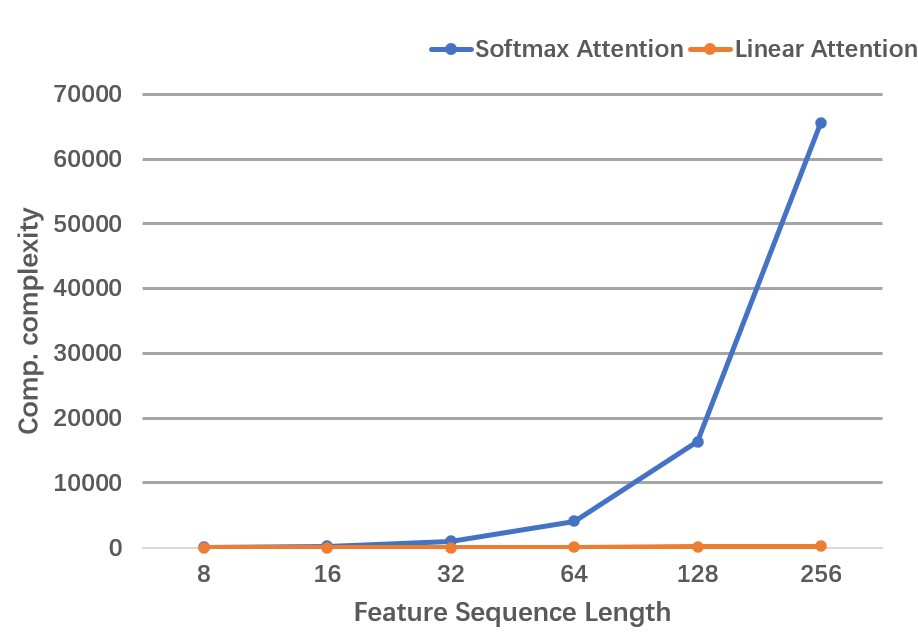}}
		\centerline{(b) Computational Complexity}
	\end{minipage}
	\caption{(a) The Process of Softmax Attention. The final attention matrix is computed by first multiplying  $Q$  and $K^T$, and then multiplying the result with $V$. Each row in the $QKV$ matrix (denoted by the slash in the figure) represents a temporal or spatial feature. It is evident that the computational complexity of Softmax Attention is $\mathcal{O}(N^2)$. (b) The Curve of Computational Complexity Growth with Feature Sequence Length. As the sequence length increases, Softmax Attention exhibits quadratic growth ($\mathcal{O}(N^2)$), whereas Linear Attention achieves significantly lower computational overhead with linear growth ($\mathcal{O}(N)$).}
    \label{fig:intro}
\end{figure}
Later, Graph Convolutional Networks (GCNs) achieved notable improvements by leveraging skeleton graph structures  \citep{yan2018spatial,chen2021channel,cheng2020skeleton}. However, most GCN-based methods typically focus on the feature information of individual nodes and their neighboring temporal/spatial nodes, assuming a fixed graph topology. Skeleton data inherently contains long-range temporal dependencies due to the sequential nature of human motion, while the relationships between joints are dynamic and flexible. Therefore, approaches that only consider neighboring temporal/spatial nodes are insufficient, and their capacity to capture long-range temporal dependencies and hierarchical relationships in skeleton data is inherently limited \citep{plizzari2021spatial}.

In recent years, Transformers \citep{vaswani2017attention} which was initially developed for Natural Language Processing (NLP) tasks, have achieved state-of-the-art performance across a range of domains \citep{huang2023hyperbolic,zhao2021point,parmar2018image}. Transformers are particularly adept at capturing long-range dependencies, and their application to skeleton data enables flexible modeling of both the spatial structure of the skeleton and long-term temporal dependencies. Likewise, they have gained substantial traction in the field of skeleton-based human action recognition \citep{do2025skateformer,ahn2023star,bai2022hierarchical,qiu2022spatio,zhang2021stst,plizzari2021skeleton}. 

However, a major drawback of transformer-based models is their computational and memory complexity, which grows quadratically with respect to the input sequence length \citep{peng2024eagle}. Specifically, as shown in Fig.\ref{fig:intro}(a), the computational complexity of traditional attention mechanisms is $\mathcal{O}(N^2)$, because given an input sequence of length $N$, the attention mechanism computes a pairwise similarity score for every pair of tokens in the sequence. This requires the generation of an $N \times N$ attention matrix, where each element represents the interaction between a query and a key. Consequently, both the computation of these interactions scale quadratically with the sequence length, leading to a complexity of $\mathcal{O}(N^2)$.

To mitigate this issue, linear attention mechanisms have been introduced \citep{peng2024eagle,han2023flatten,gu2023mamba} , reducing computational complexity to $\mathcal{O}(N)$ by approximating self-attention. As shown in Fig.\ref{fig:intro}(b), compared with traditional attention, linear attention can reduce the complexity from quadratic to linear with respect to sequence length, emerging as a promising solution for enabling efficient processing of long sequences in skeleton-based human action recognition. While this enhances efficiency, directly applying existing linear attention methods to HAR neglects the intrinsic hierarchical structure of skeleton data. Unlike textual sequences, which are inherently linear in structure, skeleton data is represented as tree-like graphs, where the hierarchical relationships between joints must be carefully preserved. Applying a standard linear attention mechanism, designed for flat, sequence-like data, to such structured inputs can lead to suboptimal performance due to its inability to account for the underlying tree topology. Moreover, standard linear attention mechanisms are often autoregressive \citep{gu2023mamba}, meaning they process inputs sequentially, which reduces their effectiveness in capturing global and bidirectional dependencies in hierarchical skeleton graph structures. This mismatch underscores the need for mechanisms capable of adapting linear attention to effectively handle non-Euclidean geometric structures, such as those represented by skeleton graphs. 

To address this challenge, we turn to hyperbolic geometry, which has demonstrated superior capabilities in modeling hierarchical data. Unlike Euclidean space, hyperbolic space provides exponential volume growth, enabling efficient representation of tree-like structures, making it an ideal candidate for skeleton-based human action recognition tasks \citep{ganea2018hyperbolic}. However, on the one hand, incorporating linear attention mechanisms in hyperbolic space adds complexity due to the non-Euclidean nature of the space \citep{qu2024llms}. On the other hand, many core operations used in standard linear attention, such as dot products and normalization, are not well-defined or computationally efficient in hyperbolic geometry. This incompatibility necessitates the development of new methods and adaptations to ensure that linear attention mechanisms can operate effectively within the hyperbolic domain.

Based on the above argument, in this paper, by leveraging the advantages of hyperbolic geometry while overcoming these operational challenges, we aim to pioneer a hyperbolic linear attention mechanism tailored for skeleton-based human action recognition, setting the stage for efficient and accurate modeling of hierarchical skeleton data. Specifically, we first perform an efficient and accurate data conversion of the skeleton data from Euclidean space to the Poincaré model in hyperbolic space through the HTC module, then model the linear attention mechanism in the HLA module, and finally convert the skeleton data from hyperbolic space back to Euclidean space through the inverse process of the HTC module. Overall, our contributions are summarized as follows:
\begin{enumerate}
    \item We propose HyLiFormer, a simple yet efficient hyperbolic linear attention transformer, which is the first linear attention mechanism designed specifically for the Poincaré model in hyperbolic space. This novel approach bridges the gap between the efficiency of linear attention and the need to model hierarchical data in hyperbolic geometry, facilitating effective skeleton-based action recognition.
    \item By incorporating the Hyperbolic Linear Attention (HLA) module, we achieve a significant reduction in computational complexity. The traditional quadratic complexity of self-attention is reduced to linear complexity, enabling the model to handle longer input sequences efficiently with minimal performance compromise.
    \item Our approach effectively preserves model performance even with the reduced computational cost. This enables the deployment of transformer-based models for skeleton-based action recognition in real-world applications, where both accuracy and computational efficiency are crucial. 
\end{enumerate}

\section{Related Work}
\subsection{Skeleton-based Action Recognition}
Skeleton-based human action recognition (HAR) has been extensively studied, evolving through multiple deep learning paradigms \citep{du2015skeleton,wang2018action,li2017joint,li2017adaptive,li2017skeleton}. Early methods for skeleton-based human action recognition relied primarily on convolutional neural networks (CNNs) to capture basic spatial interactions among skeleton points  \citep{du2015skeleton}. With the advent of Recurrent Neural Networks (RNNs) and Long Short-Term Memory (LSTM),  \citep{li2017adaptive,li2017skeleton} leveraged them to model temporal interactions. To better account for the topological structures of skeleton data, Graph Convolutional Networks (GCNs) have been extensively applied in this domain, achieving significant performance improvements  \citep{yan2018spatial,cheng2020skeleton,chen2021channel,zhou2024blockgcn}.  \citep{yan2018spatial} introduced ST-GCN, a spatiotemporal graph model that connects skeleton joints based on the natural body structure and temporal continuity.   \citep{cheng2020skeleton} proposed a network that contains spatial and temporal shift graph convolution.  \citep{chen2021channel} proposed a channel-wise topology graph convolution network (CTR-GCN) to dynamically capture spatial features at different levels of granularity.  \citep{zhou2024blockgcn} developed BlockGCN, a network designed to enhance the learning and retention of critical skeleton attributes. Collectively, these contributions represent significant advancements in utilizing the inherent graph structure of skeleton data for action recognition tasks. 

In contrast to the aforementioned approaches, our work introduces a novel hyperbolic-space-based linear attention mechanism. Benefiting from the linear attention design, our method achieves superior modeling of temporal dependencies compared to graph convolutional networks (GCNs), while leveraging the hyperbolic space to capture the hierarchical structure of skeletal data more effectively. Additionally, our approach is significantly more lightweight than transformer-based methods, addressing the challenges of high memory consumption without compromising much performance. To the best of our knowledge, this is the first application of linear attention mechanisms within the Poincaré model in hyperbolic space, addressing the dual challenges of high memory usage and hierarchical information modeling limitations in existing methods. 

\subsection{Hyperbolic Transformer}
In recent years, hyperbolic geometry has demonstrated significant potential for modeling complex structured data, particularly those with tree-like or hierarchical structures \citep{yang2024hypformer}. Numerous studies have begun exploring the application of transformers in hyperbolic space. For instance,  \citep{huang2023hyperbolic} employed a hyperbolic transformer for music generation, while  \citep{chen2024hyperbolic} utilized it for pre-trained language models. Additionally, hyperbolic geometry has been applied to model hierarchical skeleton data in skeleton-based human action recognition.  \citep{ermolov2022hyperbolic} introduced a hyperbolic vision transformer model featuring a novel metric learning loss that combines the representational power of hyperbolic space with the simplicity of cross-entropy loss.  \citep{chen2022hmanet} leveraged hyperbolic space mapping to enhance spatiotemporal feature representation, and  \citep{qu2024llms} integrated large language models with hyperbolic space to improve feature representation. In contrast to these approaches, our work is the first to explore linear attention mechanisms in hyperbolic spaces for skeleton-based human action recognition.

\section{Preliminary}
\label{sec:preliminary}
In this section, we will introduce some basics about Poincaré model in hyperbolic space and the two dominant Euclidean linear attention mechanisms briefly.
\subsection{Poincaré Model}
In this study, we adopt the Poincaré model as the hyperbolic space. To better understand the transformation formulas in the HTC module (introduced in Section \ref{sec:method}), it is essential to first define the Poincaré model and its role in mapping data from Euclidean space to hyperbolic space. An $n$-dimensional Poincaré model, denoted as $\mathbb{B}^{n}_{\kappa}$ is a Riemannian manifold $(\mathbb{B}^{n}_{\kappa},g^{\mathbb{B}}_{\mathbf{x}})$ with constant negative curvature $\kappa<0$. The Poincaré model is defined as
\begin{equation}
    \mathbb{B}^{n}_{\kappa}=\left\{\mathbf{x}\in\mathbb{R}^n:||\mathbf{x}||<-\frac{1}{\kappa}\right\}
\end{equation}
where $||\cdot||$ represents the Euclidean norm. Furthermore, the manifold is equipped with the Riemannian metric tensor
\begin{equation}
    g^{\mathbb{B}}_{\mathbf{x}}=\left(\frac{2}{1+\kappa||x||^2}\right)^2 g^{\mathbb{E}}
\end{equation}
where $x\in\mathbb{B}^{n}_{\kappa}$ and $g^{\mathbb{E}}$ denotes the Euclidean metric tensor. This formula demonstrates that hyperbolic geometry is a powerful framework for modeling hierarchical and structured data, where distances grow exponentially just like a tree structure. This property makes it particularly well-suited for skeleton-based human action recognition (HAR), as human motion inherently follows a multi-scale hierarchy.

\subsection{Receptance Weighted Key Value (RWKV)} 
RWKV \citep{peng2024eagle} is a relatively hot linear attention solution in recent years, which combines the advantages of both RNN and transformer and can realize parallelization of training, and in prediction can be realized with the linear growth of the predicted feature sequence, the prediction memory is also linear growth, greatly reducing the computational overhead. First, it performs a linear interpolation between the input data at the current time step $x_t$ and the previous time step $x_{t-1}$ to compute the matrices \ $r_t$,  $k_t$,  $v_t$, and  $g_t$
\begin{equation}
    \Box_t=W_{\Box}(\mu_{\Box}x_t+(1-\mu_{\Box})x_{t-1}), \quad \Box \in \{r,k,v,g\}
    \label{eq:rwkv1}
\end{equation}
Then, low-rank adaptation is applied to obtain $d_t$, which is subsequently exponentiated to yield $w_t$. After these steps, the model proceeds with the $wkv_t$ module, which is the core component of RWKV6. The equations for the $wkv_t$ module are defined as follows
\begin{equation}
    wkv_{t} =  \mathrm{d}(u)\cdot k_{t}^\mathrm{T} \cdot v_{t} + \sum_{i=1}^{t-1}  \mathrm{d}\left(\bigodot_{j=i+1}^{t-1}w_{j}\right) \cdot  k_{i}^\mathrm{T} \cdot v_{i}
    \label{eq:rwkv2}
\end{equation}
where $\mathrm{d}$ represents a diagonal matrix, and $\bigodot_{j=i+1}^{t-1} w_j$ denotes the Hadamard product. RWKV combines RNN and transformer advantages for efficient, scalable linear attention, reducing computational overhead.

\subsection{SSM-based Mamba}
The models based on Structured State Space (SSM), namely the Structured State Space Sequence model (S4) and Mamba \citep{gu2023mamba}, are inspired by continuous systems. The system operates by transforming the input \( x(t) \) to the output \( y(t) \) through a hidden state \( h(t) \), where \( h(t) \in \mathbb{R}^\mathtt{N} \). The evolution of the hidden state is governed by the parameter matrix \( \mathbf{A} \in \mathbb{R}^{\mathtt{N \times N}} \), while the input and output projections are defined by the matrices \( \mathbf{B} \in \mathbb{R}^{\mathtt{N} \times 1} \) and \( \mathbf{C} \in \mathbb{R}^{1 \times \mathtt{N}} \), respectively. The discrete versions of the system, namely S4 and Mamba, a time-step parameter \( \mathbf{\triangle} \) is introduced to convert the continuous parameters \( \mathbf{A} \) and \( \mathbf{B} \) into their discrete counterparts \( \mathbf{\overline{A}} \) and \( \mathbf{\overline{B}} \). A common approach for this conversion is Zero-Order Hold (ZOH), which is defined as follows:
\begin{equation}
    \begin{aligned}
        \mathbf{\overline{A}} &= \exp{(\mathbf{\triangle} \mathbf{A})}, \\
        \mathbf{\overline{B}} &= (\mathbf{\triangle} \mathbf{A})^{-1} (\exp{(\mathbf{\triangle} \mathbf{A})} - \mathbf{I}) \cdot \mathbf{\triangle} \mathbf{B}.
    \end{aligned}
    \label{eq:mamba1}
\end{equation}
These equations provide the discretization of the continuous system and define the discrete parameters used in the S4 and Mamba models. After discretizing $ \mathbf{\overline{A}} $ and $ \mathbf{\overline{B}} $, the system can be rewritten in its discrete form using a step size 
\begin{equation}
    \begin{aligned}
        h_t &= \mathbf{\overline{A}} h_{t-1} + \mathbf{\overline{B}} x_t, \\
        y_t &= \mathbf{C} h_t.
    \end{aligned}
    \label{eq:mamba2}
\end{equation}
Finally, the model computes the output via global convolution:
\begin{equation}
    \begin{aligned}
        \mathbf{\overline{K}} &= (\mathbf{C}\mathbf{\overline{B}}, \mathbf{C}\mathbf{\overline{A}}\mathbf{\overline{B}}, \dots, \mathbf{C}\mathbf{\overline{A}}^{\mathtt{M}-1} \mathbf{\overline{B}}), \\
        \mathbf{y} &= \mathbf{x} * \mathbf{\overline{K}},
    \end{aligned}
    \label{eq:mamba3}
\end{equation}
where $ \mathtt{M} $ is the length of the input feature sequence $ \mathbf{x} $, and $ \overline{\mathbf{K}} \in \mathbb{R}^{\mathtt{M}} $ represents a structured convolution kernel.

\textbf{Limitations.}\thinspace (1) \textbf{Unsuitable for hierarchical skeleton data.}\label{limitations1} These models were originally designed for natural language processing tasks and typically exhibit autoregressive properties, which make them more suited for modeling unidirectional linear data. For example, in Eq.\ref{eq:rwkv1} and Eq.\ref{eq:mamba2}, they generate each output by relying on previous outputs. However, skeleton data is hierarchical and bidirectional in temporal dimension, and the above linear attention mechanism limits their ability to effectively model the inherent hierarchical structure and bidirectional temporal information in the skeleton data. (2) \textbf{Poor definitions for operations in the hyperbolic Transformer.} \label{limitations2} Although the linear attention mechanisms discussed above can be directly applied to hyperbolic space to address the hierarchical modeling of skeleton data, existing linear attention mechanisms still face challenges when applied directly to hyperbolic space. As seen with RWKV and Mamba, certain key operations are not explicitly defined in hyperbolic space, such as matrix diagonalization in Eq.\ref{eq:rwkv1}, and matrix inversion in Eq.\ref{eq:mamba2} and Eq.\ref{eq:mamba3}. Alternatively,  \citep{zhang2021hype} defines an operation similar to the RWKV GRU, but it essentially maps the data back and forth between hyperbolic space and Euclidean space to avoid performing complex Euclidean operations in hyperbolic space. This either makes these models unsuitable for hyperbolic space or disrupts the continuity of the computation, resulting in significant computational overhead.
\section{Method}
\label{sec:method}
\begin{figure*}[t]
    \centering
    \includegraphics[width=0.75\linewidth]{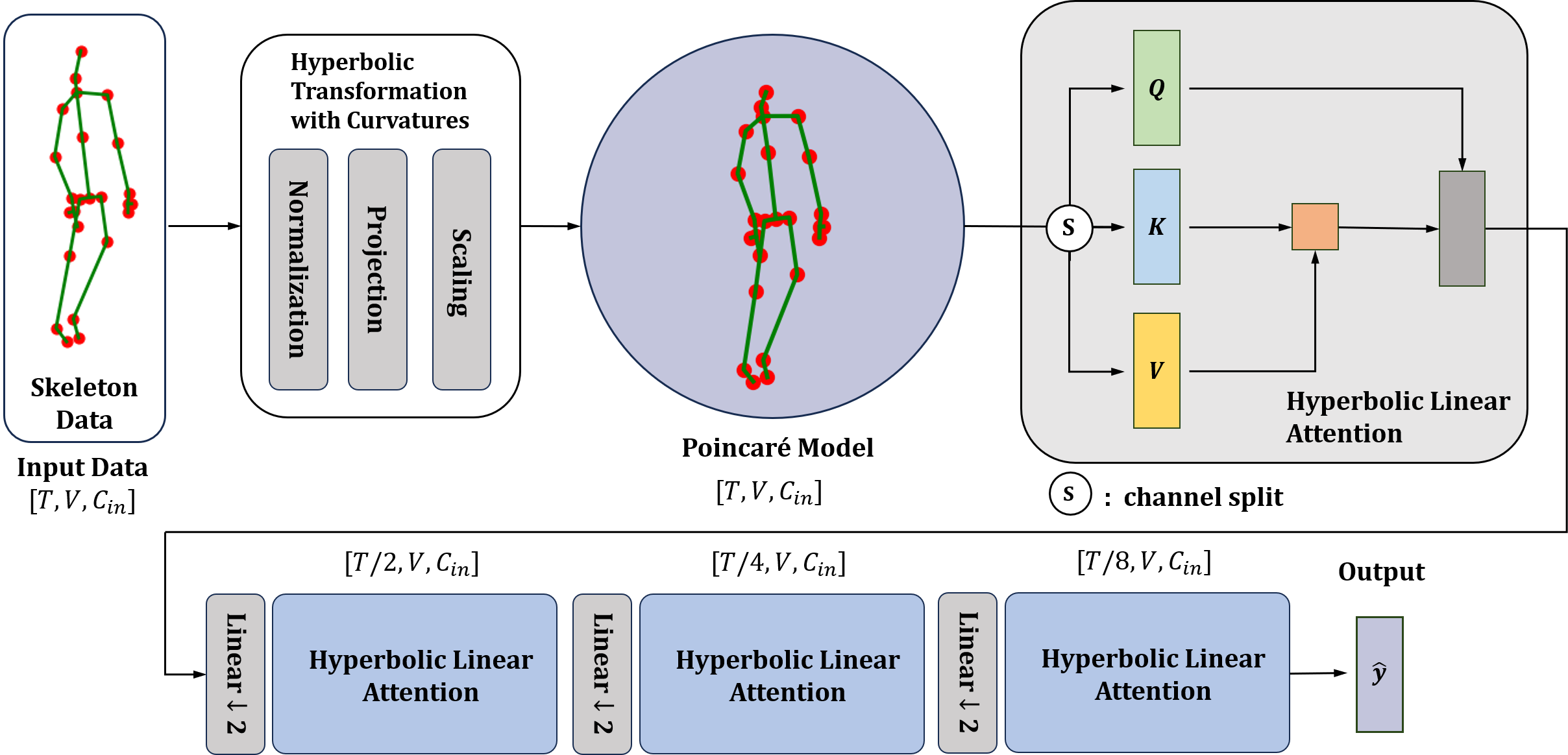}
    \caption{Framework of HyLiFormer. The input data (skeleton data) is projected onto the Poincaré model through Hyperbolic Transformation with Curvatures (HTC). The transformed data then passes through the hyperbolic linear attention block, which captures the temporal and hierarchical information of the skeleton data. Finally, the data is mapped back to Euclidean space using the inverse of the HTC, which is omitted in the diagram for simplicity.}
    \label{fig:model}
\end{figure*}
Skeleton-based human action recognition focuses on predicting human actions from a sequence of given skeleton points. The core challenge of this task lies in accurately and efficiently modeling the spatiotemporal relationships between skeleton joints. To address this, many existing methods leverage the self-attention mechanism in transformers  \citep{vaswani2017attention} to model the temporal and spatial dependencies of each joint relative to all others, achieving state-of-the-art performance  \citep{qiu2022spatio,zhang2021stst}.

However, the self-attention mechanism has a critical limitation: its computational complexity grows quadratically with the length of the input feature sequence. This significantly increases both training and inference time, hindering the deployment of transformer-based approaches in real-world applications where computational efficiency is crucial. To mitigate this limitation, we initially explored the use of mainstream linear attention mechanisms for skeletal data. Unfortunately, as discussed in Section~\ref{limitations1}, these methods face specific challenges when applied to this domain. We then attempted to model linear attention mechanisms in hyperbolic space but encountered additional issues, as detailed in Section~\ref{limitations2}.

Considering these challenges and the limitations of conventional self-attention mechanisms, inspired by  \citep{han2023flatten, ganea2018hyperbolic}, we propose HyLiFormer, a novel framework designed to efficiently learn and model the spatiotemporal information in skeleton data. HyLiFormer incorporates two key components: the Hyperbolic Transformation with Curvature (HTC) module, which projects the skeleton data from Euclidean space into the Poincaré model of hyperbolic space under curvature constraints, and the Hyperbolic Linear Attention (HLA) module, which performs self-attention operations in hyperbolic space. Finally, the HTC module maps the processed data back to Euclidean space. Below, we describe the details of the HTC module.

\subsection{Hyperbolic Transformation with Curvatures (HTC)}
Inspired by  \citep{balazevic2019multi}, the HTC module is incorporated to leverage hyperbolic geometry to enhance the linear attention mechanism's ability to model spatial relationships in skeleton data. This is achieved by performing an Euclidean-to-hyperbolic projection that embeds the skeleton data into hyperbolic space. Inspired by \citep{ganea2018hyperbolic}, the transformation involves computing unit vectors to preserve the directional information of the skeleton data, modeling the hierarchy of skeleton joints using the $\tanh$ function, and scaling the data to satisfy the constraints of the Poincaré model. 

Hyperbolic Möbius scalar multiplication \citep{ganea2018hyperbolic} has been widely utilized for hyperbolic embeddings. Building on this, we designed the HTC module to transform skeleton data from Euclidean space into the Poincaré model while preserving its hierarchical and geometric structure.

Formally, given a sequence of input skeleton point data $\mathbf{x} \in \mathbb{R}^{T \times V \times M \times C_{\text{in}}}$ and a target hyperbolic space $\mathbb{B}^{n}_{\kappa}$, where $T$ is the temporal length, $V$ is the number of joints per frame, $M$ is the number of individuals performing the action, and $\rm C_{in}$ is the dimensionality of the data for each joint, the transformation process consists of three steps, each described below.

\textbf{Unit Vector Calculation.}\thinspace To preserve the directional information of skeleton motion and capture the dynamics of joint movement, we compute the unit vectors $\mathbf{\hat{x}}$ of the input skeleton data $\mathbf{x}$. This step is defined as:
\begin{equation} 
    \mathbf{\hat{x}} = \frac{\mathbf{x}}{||\mathbf{x}||}, \quad ||\mathbf{x}|| = \sqrt{\sum_{i=1}^{\rm C_{in}} x_i^2}    
\end{equation} 
where $||\mathbf{x}||$ is the Euclidean norm of each data point, and $\mathbf{\hat{x}}$ retains the directional information while normalizing the magnitude. This operation effectively separates the motion direction from the magnitude, allowing subsequent steps to focus on the hierarchical structure.

\textbf{Hierarchy Modeling.}\thinspace To encode the hierarchical structure of skeleton data, we apply the hyperbolic tangent ($\tanh$) function to the normalized magnitude $||\mathbf{x}||$. The transformation is defined as: \begin{equation} 
    \mathbf{\tilde{x}} = \tanh\left(-\kappa \cdot ||\mathbf{x}||\right), 
\end{equation} 
where $\kappa$ represents the curvature of the hyperbolic space. The $\tanh$ function compresses the range of $||\mathbf{x}||$ into $(-1, 1)$, enabling the mapping of hierarchical levels in hyperbolic space. Small magnitudes ($||\mathbf{x}|| \to 0$) are mapped closer to the origin of the Poincaré model, corresponding to global features (e.g., overall motion). Large magnitudes ($||\mathbf{x}|| \to \infty$) are mapped near the boundary, corresponding to local features (e.g., detailed joint-level actions). The output $\mathbf{\tilde{x}}$ represents the scaled magnitude while preserving the original directionality through $\mathbf{\hat{x}}$.

\textbf{Scaling for Hyperbolic Constraints.}\thinspace Finally, to ensure the transformed data adheres to the constraints of the Poincaré model (i.e., staying within the unit ball in hyperbolic space), we combine the normalized direction $\mathbf{\hat{x}}$ and scaled magnitude $\mathbf{\tilde{x}}$ to compute the final hyperbolic representation
\begin{equation} 
    \mathbf{x^\mathbb{B}_{\kappa}} = \mathbf{\tilde{x}} \cdot \mathbf{\hat{x}} = -\frac{1}{\kappa} \cdot \tanh\left(-\kappa \cdot ||\mathbf{x}||\right) \cdot \mathbf{\hat{x}}. 
\end{equation} 
Here, $\mathbf{x^\mathbb{B}_{\kappa}} \in \mathbb{B}^{n}_{\kappa}$ represents the transformed skeleton data in hyperbolic space. This transformation preserves the hierarchical structure, while ensuring compatibility with the curvature $\kappa$ of the target Poincaré model. It is rigorously proved as follows

\begin{lemma}\label{lem:htc_poincare_constraint}
Given an input skeleton data point $\mathbf{x} \in \mathbb{R}^{T \times V \times M \times C_{\text{in}}}$, the transformation applied by the HTC module ensures that the output $\mathbf{x^{\mathbb{B}}_{\kappa}}$ satisfies the Poincar\'e model constraint, i.e., $\|\mathbf{x^{\mathbb{B}}_{\kappa}}\| < -\frac{1}{\kappa}$. 
\end{lemma}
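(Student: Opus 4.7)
The plan is to unfold the definition of $\mathbf{x}^{\mathbb{B}}_{\kappa}$ from the Scaling step, compute its Euclidean norm directly, and then exploit the boundedness of $\tanh$. Recall from the statement that $\kappa<0$, so $-\tfrac{1}{\kappa}>0$ and $-\kappa>0$; this sign bookkeeping is the only thing one has to stay careful about throughout the argument.

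First I would observe that $\hat{\mathbf{x}} = \mathbf{x}/\|\mathbf{x}\|$ is a unit vector by construction, so $\|\hat{\mathbf{x}}\| = 1$ whenever $\mathbf{x}\neq 0$. Since $\mathbf{x}^{\mathbb{B}}_{\kappa} = -\tfrac{1}{\kappa}\,\tanh(-\kappa\|\mathbf{x}\|)\,\hat{\mathbf{x}}$ is a scalar multiple of $\hat{\mathbf{x}}$, the norm factors as
\begin{equation}
\|\mathbf{x}^{\mathbb{B}}_{\kappa}\| \;=\; \Bigl|-\tfrac{1}{\kappa}\Bigr|\cdot\bigl|\tanh(-\kappa\|\mathbf{x}\|)\bigr|\cdot\|\hat{\mathbf{x}}\| \;=\; -\tfrac{1}{\kappa}\,\bigl|\tanh(-\kappa\|\mathbf{x}\|)\bigr|.
\end{equation}
Next I would invoke the standard real-analysis fact that $|\tanh(y)|<1$ for every $y\in\mathbb{R}$ (with equality only in the limits $y\to\pm\infty$). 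Applied to $y=-\kappa\|\mathbf{x}\|$, which is a finite real number for any finite input $\mathbf{x}$, this yields $|\tanh(-\kappa\|\mathbf{x}\|)|<1$, and multiplying by the positive constant $-\tfrac{1}{\kappa}$ gives $\|\mathbf{x}^{\mathbb{B}}_{\kappa}\| < -\tfrac{1}{\kappa}$, which is precisely the Poincar\'e ball constraint $\mathbf{x}^{\mathbb{B}}_{\kappa}\in\mathbb{B}^{n}_{\kappa}$.

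The only edge case worth flagging is $\mathbf{x}=0$, where $\hat{\mathbf{x}}$ is not well defined. I would handle this by a continuity/limit argument: as $\|\mathbf{x}\|\to 0$, the factor $\tanh(-\kappa\|\mathbf{x}\|)\to 0$ dominates, so by the standard convention of extending the map by $\mathbf{x}^{\mathbb{B}}_{\kappa}=\mathbf{0}$ at the origin, the constraint $\|\mathbf{0}\|=0<-\tfrac{1}{\kappa}$ is trivially satisfied. Since the proof is essentially a one-line calculation combined with the bound on $\tanh$, there is no genuinely hard step; the main thing to watch is consistent handling of the negative curvature $\kappa$ so that absolute values are removed correctly when writing $|-1/\kappa|=-1/\kappa$.
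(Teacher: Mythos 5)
Your proposal is correct and follows essentially the same route as the paper's own proof: factor out the norm, use $\|\hat{\mathbf{x}}\|=1$, and apply $|\tanh(\cdot)|<1$ together with the positivity of $-1/\kappa$ to conclude $\|\mathbf{x}^{\mathbb{B}}_{\kappa}\|<-\tfrac{1}{\kappa}$. Your additional handling of the $\mathbf{x}=0$ edge case and the explicit sign bookkeeping are small refinements the paper omits, but the argument is the same.
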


\begin{proof}\renewcommand{\qedsymbol}{}
The transformation consists of unit vector computation, hierarchy modeling, and hyperbolic scaling. First, the unit vector of the input is computed as $\mathbf{\hat{x}} = \frac{\mathbf{x}}{||\mathbf{x}||}$, ensuring $||\mathbf{\hat{x}}|| = 1$. Next, hierarchy modeling applies $\mathbf{\tilde{x}} = \tanh(-\kappa ||\mathbf{x}||)$, which maps magnitudes to $(-1,1)$. Finally, the transformed representation is given by $\mathbf{x^{\mathbb{B}}_{\kappa}} = -\frac{1}{\kappa} \tanh(-\kappa ||\mathbf{x}||) \cdot \mathbf{\hat{x}}$. Since $||\tanh(-\kappa ||\mathbf{x}||)|| < 1$, it follows that $\|\mathbf{x^{\mathbb{B}}_{\kappa}}\| < -\frac{1}{\kappa}$, ensuring the result remains within the Poincar\'e model. 
\end{proof}

Through this process, the skeleton data is efficiently and effectively transformed from Euclidean space to hyperbolic space within the Poincaré model, enabling better hierarchical representation and spatiotemporal modeling in subsequent stages of HyLiFormer. The operation of mapping data from hyperbolic space back to Euclidean space within the Poincaré model is simply the inverse of all operations in the HTC module, so we will not elaborate on it here.

\subsection{Hyperbolic Linear Attention (HLA)}
The Hyperbolic Linear Attention (HLA) module is proposed to address the computational bottlenecks in traditional Transformer-based attention mechanisms, specifically the quadratic complexity $\mathcal{O}(N^2)$ with respect to the sequence length $N$, which hinders scalability and efficiency when processing long sequences. To overcome this limitation, inspired by  \citep{han2023flatten} \citep{tsai2019transformer}, we design a hyperbolic linear attention and rigorously prove that it satisfies the hyperbolic space Poincaré model constraint.

\textbf{Softmax Attention.}
We first recall the general form of self-attention in Euclidean Transformers, represented by the following weighted sum of value vectors, with weights determined by the similarity between the query and key vectors: \begin{equation} 
\begin{aligned} 
Q &= x W_Q, K = x W_K, V = x W_V, \\ V_i'&=\sum_{j=1}^{N}\ \frac{{\rm Sim}{\left(Q_i,K_j\right)}}{\sum_{j=1}^{N}\ {\rm Sim}{\left(Q_i,K_j\right)}}V_j. 
\label{eq:attn} 
\end{aligned} 
\end{equation} 
where $W_Q, W_K, W_V \in \mathbb{R}^{F \times F}$ are projection matrices and $\rm Sim(\cdot, \cdot)$ denotes the similarity function. The computational complexity of the above formula is $\mathcal{O}(N^2)$ because the weighted summation requires traversing all key-value pairs, such as the calculation of the similarity matrix $QK^T$.

To overcome this limitation, we transform the traditional attention mechanism into a form that avoids the explicit calculation of the similarity matrix $QK^T$. Our hyperbolic linear attention mechanism reduces the computational cost to $\mathcal{O}(N)$ by using matrix multiplication properties.

Specifically, let $\mathbf{x} \in \mathbb{B}^{n}{\kappa}$ be the input sequence of skeleton data, with the shape $\mathbf{x} \in \mathbb{R}^{T \times V \times M \times C{\text{in}}}$, where the symbols have the same meaning as in the HTC module. The attention mechanism is computed using the following steps:

\textbf{Query, Key, and Value Matrices.} \thinspace 
We first define the query ($Q$), key ($K$), and value ($V$) matrices, which are derived from the input $\mathbf{x}$, consistent with the traditional self-attention mechanism. These are computed as: \begin{equation} Q = \mathbf{x}[1:], \quad K = \mathbf{x}[2:], \quad V = \mathbf{x}[3:],
\end{equation} where $Q$, $K$, and $V$ represent the query, key, and value matrices, respectively.

\textbf{Reformulating the Attention Mechanism.} \thinspace 
The output of the traditional attention mechanism is computed as a weighted sum of the value vectors, with weights determined by the similarity between the query and key vectors. To avoid the explicit calculation of the similarity matrix $QK^T$ in Eq. \ref{eq:attn} and achieve linear attention, we propose an approximation by reordering the operations. The reformulated attention mechanism is given by
\begin{equation}
    V_i=Q_i\cdot \mathrm{Sim}(K_j^T\cdot V_j)
\end{equation}
where $Q_i$ and $K_j$ represent feature-mapped representations of $Q$ and $K$, obtained through a kernel transformation $\phi(\cdot)$. The function $\mathrm{Sim}(\cdot)$ defines a similarity operation applied to the transformed key-value pairs.

In this formulation, $\mathrm{Sim}(K_j^T\cdot V_j)$ aggregates the key-value pairs into a fixed-size representation with dimensions determined by the feature space $\mathbb{R}^F$, independent of the sequence length $N$. This allows $Q_i$ to interact with a fixed-size representation, effectively reducing the computational complexity from $\mathcal{O}(N^2F)$ to $\mathcal{O}(NF^2)$. By reordering the operations, the reformulation avoids directly computing the $N \times N$ similarity matrix $QK^T$, significantly improving efficiency for long sequences.

The choice of similarity function $\mathrm{Sim}(\cdot)$ plays a critical role in maintaining the expressiveness of the attention mechanism while ensuring numerical stability and computational efficiency. In our work, we adopt the softmax function as the kernel, defined as $\phi(x) = \exp(x)$. This choice captures the relative importance of elements effectively and ensures numerical stability in hyperbolic space.  It is rigorously proved as follows.

\begin{lemma}\label{lem:hla_poincare_constraint}
Given an input sequence of skeleton data $\mathbf{x} \in \mathbb{B}^{n}_{\kappa}$, the transformation applied by the HLA module ensures that the output $\mathbf{V}$ satisfies the Poincar\'e ball constraint, i.e., $\|\mathbf{V}\| < -\frac{1}{\kappa}$.
\end{lemma}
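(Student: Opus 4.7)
The plan is to mirror the structure of Lemma~\ref{lem:htc_poincare_constraint} and reduce the Poincar\'e containment for the HLA output to norm bounds on the three ingredients $Q$, $K$, $V$. Since by hypothesis $\mathbf{x} \in \mathbb{B}^{n}_{\kappa}$, the slices $Q = \mathbf{x}[1:]$, $K = \mathbf{x}[2:]$, $V = \mathbf{x}[3:]$ inherit the ball constraint, so that each of $\|Q_i\|,\|K_j\|,\|V_j\| < -\tfrac{1}{\kappa}$. I would begin by writing this out explicitly and fixing the notation for the kernelized quantities $\phi(Q_i)$, $\phi(K_j)$ with $\phi=\exp$.

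The second step is to bound the aggregate $S = \mathrm{Sim}(K_j^{T}\!\cdot V_j)$ that sits between $Q_i$ and the output. Here I would exploit (i) the Cauchy--Schwarz style inequality $\|K_j^{T}\!\cdot V_j\| \le \|K_j\|\,\|V_j\| < \kappa^{-2}$ componentwise, and (ii) the monotonicity of $\exp$ on a bounded interval to obtain a uniform upper bound on $\|S\|$ purely in terms of $\kappa$. This yields a constant $C(\kappa)$, independent of the sequence length $N$, that controls the magnitude of the key-value aggregate.

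The third step is to propagate that bound through the final left-multiplication $V_i = Q_i \cdot S$. Using submultiplicativity, $\|V_i\| \le \|Q_i\|\,\|S\| < -\tfrac{1}{\kappa}\cdot C(\kappa)$, and the goal is to show the right-hand side is still strictly less than $-\tfrac{1}{\kappa}$. If the raw bound is not tight enough, the natural fix, in the spirit of Lemma~\ref{lem:htc_poincare_constraint}, is to pass the output through the same $\tanh$-based radial retraction used in the HTC module: writing $\widehat{V}_i = V_i/\|V_i\|$ and $\widetilde{V}_i = -\kappa^{-1}\tanh(-\kappa\|V_i\|)\widehat{V}_i$, the identity $|\tanh(\cdot)|<1$ immediately gives $\|\widetilde{V}_i\| < -\tfrac{1}{\kappa}$, reusing the argument already established for HTC.

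The main obstacle I expect is step two: the kernel $\phi=\exp$ is a priori unbounded, and a naive application blows up the norm of $S$. The argument therefore hinges on the fact that the arguments of $\exp$ are themselves confined to a compact range by the Poincar\'e constraint on $K$ and $V$, so one really needs to carry the Poincar\'e bound \emph{inside} the kernel rather than apply it only afterwards. Once that is done, the outer multiplication by $Q_i$ followed by the radial retraction is routine, and the lemma follows in direct parallel with the HTC proof.
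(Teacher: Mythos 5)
Your proposal diverges from the paper's argument at exactly the point that matters, and the divergence leaves a genuine gap relative to the lemma as stated. The paper's proof does not attempt your step two at all: it simply stipulates, as a design property, that $\mathrm{Sim}(K_j^T\cdot V_j)$ "preserves hyperbolic distances and results in outputs bounded by $(-1,1)$," and then concludes $\|V_i\| = \|Q_i\cdot \mathrm{Sim}(K_j^T\cdot V_j)\| = \|Q_i\|\cdot\|1\| < -\tfrac{1}{\kappa}$. In other words, the bound the paper needs is $\|\mathrm{Sim}(\cdot)\|\le 1$, taken as given, not a constant $C(\kappa)$ derived from Cauchy--Schwarz. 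Your attempt to actually derive the bound is more honest, and your diagnosis is correct: with $\phi=\exp$ and arguments bounded only by $\|K_j\|\,\|V_j\| < \kappa^{-2}$ (which is $1$ at the paper's working curvature $\kappa=-1$), the aggregate can have norm up to roughly $e>1$, so the submultiplicative estimate gives $\|V_i\| < -\tfrac{1}{\kappa}\,C(\kappa)$ with $C(\kappa)>1$, which does not place $V_i$ inside the ball.

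The problem is your fallback. Passing the output through a $\tanh$-based radial retraction $\widetilde{V}_i = -\kappa^{-1}\tanh(-\kappa\|V_i\|)\,\widehat{V}_i$ certainly yields a point in $\mathbb{B}^n_{\kappa}$, but that retraction is part of the HTC construction, not of the HLA module as defined ($V_i = Q_i\cdot\mathrm{Sim}(K_j^T\cdot V_j)$ with $\phi=\exp$). So you end up proving containment for a modified output $\widetilde{V}_i$, not for the output $\mathbf{V}$ the lemma actually refers to; the lemma as stated is not established by your argument. To close the gap along the paper's lines you would instead need the normalization property of $\mathrm{Sim}$ itself --- e.g.\ the softmax-style denominator that standard linear attention carries, which forces the aggregated weights to sum to one and hence bounds the similarity term by $1$ --- which is precisely the unproven "designed such that" assumption the paper leans on. Either make that normalization explicit and prove $\|\mathrm{Sim}(K_j^T\cdot V_j)\|\le 1$, or argue that the retraction is genuinely part of the module; as written, your proof does neither.
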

\begin{proof}\renewcommand{\qedsymbol}{}
The HLA module reformulates the traditional attention mechanism while preserving the hyperbolic structure. Given $\mathbf{x} \in \mathbb{B}^{n}_{\kappa}$, the query, key, and value matrices are computed as $Q = \mathbf{x}[1:], K = \mathbf{x}[2:], V = \mathbf{x}[3:]$. The hyperbolic linear attention is defined as $V_i = Q_i \cdot \mathrm{Sim}(K_j^T \cdot V_j)$, where $\mathrm{Sim}(\cdot)$ is a similarity function based on a kernel transformation $\phi(\cdot)$. To ensure that $V_i$ satisfies the Poincar\'e model constraint, we analyze each component of this equation.

Since $Q_i$ is derived from $\mathbf{x}$, it inherits the norm bound $\|Q_i\| < -\frac{1}{\kappa}$. The similarity function $\mathrm{Sim}(K_j^T \cdot V_j)$ is designed such that it preserves hyperbolic distances and results in outputs bounded by $(-1,1)$. Therefore, we have:
\begin{equation}
\|V_i\| = \|Q_i \cdot \mathrm{Sim}(K_j^T \cdot V_j)\|= \|Q_i\| \cdot ||1||< -\frac{1}{\kappa}.
\end{equation}
Thus, the transformed representations remain within the Poincar\'e model, ensuring that the HLA module satisfies the hyperbolic space constraint.
\end{proof}

Through this procedure, the hyperbolic linear attention module avoids the direct computation of the $QK^T$ similarity matrix, reducing the computational complexity to $\mathcal{O}(NF^2)$. This not only enhances efficiency but also preserves the expressive power and numerical stability of the attention mechanism.
\section{Experiment}
\begin{table*}[t]
    \footnotesize
    \centering
    \caption{Comparison of recognition efficiency performances against transformer-based methods on the NTU-RGB+D 60 and NTU-RGB+D 120 datasets under the joint modality. Bold text indicates the optimal performance, while dagger marks (${ \_ }$) denote the second-best performance.}
    \vspace{0.15cm}
    \label{tab:training time}
    \resizebox{1\textwidth}{!}{
    \def\arraystretch{1.25} 
    \begin{tabular}{l|c|c|c|c|c|c|c}
        \hline
        \multirow{2}{*}{Methods} & \multicolumn{2}{c|}{NTU-RGB+D 60 (\%)} & \multicolumn{2}{c|}{NTU-RGB+D 120 (\%)} & \multirow{2}{*}{\makecell{Training Time \\ (min/epoch)}} & \multirow{2}{*}{\makecell{Params.\\(M)}} & \multirow{2}{*}{\makecell{FLOPs(G)}}\\ \cline{2-5}
        & X-Sub60 & X-View60 & X-Sub120 & X-Set120 & & & \\ 
        \hline
        ST-TR \citep{plizzari2021spatial} & 89.9 & 96.1 & 81.9 & 84.1 & - & 19.4 &  57.6\\ 
        STTFormer \citep{qiu2022spatio} & 89.9 & 94.3 & - & - & 8.8 & 6.4 & 41.7 \\
        Zoom Transformer \citep{zhang2022zoom} & 90.1 & 95.3 & 84.8 & 86.5 & - & 4.8 & 5.6 \\
        HyperFormer \citep{ding2023hyperformer} & 90.7 & 95.1 & 86.6 & 88.0 & 8.3 & 2.7 & 14.8 \\  
        FreqMixFormer \citep{wu2024frequency} & 91.5 & 96.0 & \textbf{87.9} & \underline{89.1} & 40.3 & 2.1 & \textbf{2.4}\\
        SkateFormer \citep{do2025skateformer} & \textbf{92.6} & \textbf{97.0} & \underline{87.7} & \textbf{89.3} & \underline{5.0} & \underline{2.0} & 3.6  \\
        \hline
        HyLiFormer & \underline{91.7} & \underline{96.2} & 87.5 & 88.6 & \textbf{3.7} & \textbf{1.9} & \underline{3.5} \\ \hline
    \end{tabular}}
\end{table*}
\subsection{Datasets}
To validate the effectiveness and generalizability of our proposed method, we conduct experiments on two widely-used datasets for skeleton-based human action recognition: NTU RGB+D and NTU RGB+D 120 Dataset. These datasets provide comprehensive benchmarks with multi-modality information, including depth maps, 3D skeleton joint positions, RGB frames, and infrared sequences. However, our study focuses solely on the skeleton joint unimodality.

\textbf{NTU RGB+D.}\thinspace This dataset includes 60 action categories with 56,880 samples collected from 40 participants. The actions are categorized into three main groups: 40 daily activities (e.g., drinking, eating, reading), 9 health-related actions (e.g., sneezing, staggering, falling), and 11 interactive actions (e.g., punching, kicking, hugging). Evaluation protocols include two settings: cross-subject testing (X-Sub60), where participants are divided into distinct training and testing groups, and cross-view testing (X-View60), where data from one camera is used for testing and the other two for training.

\textbf{NTU RGB+D 120.}\thinspace As an extended version of NTU RGB+D, this dataset contains 120 action categories, covering a broader range of daily activities, mutual interactions, and health-related actions. It includes over 114,000 video samples and more than 8 million frames, collected from 106 participants. The evaluation protocols are consistent with NTU RGB+D, employing cross-subject testing (X-Sub120) and cross-view testing (X-Set120) to ensure robust benchmarking.

\subsection{Experiment Details}
All experiments were run on a single NVIDIA RTX 3090 GPU, and the framework and optimizer followed  \citep{do2025skateformer}. We found that our model has different learning rates for datasets of varying complexity, so the learning rate varies across datasets. For the experiments on the NTU RGB+D and NTU RGB+D 120 datasets, we used the following configurations: $\rm V = 48$ (excluding the body centers, so both individuals have 24 joints each), $\rm T = 64, L = 4, K = 12, M = 8, N = 8,$ and $\rm C = 96$. Also, the curvature of the Poincaré model in hyperbolic space is chosen to be -1.

\subsection{Training Time Comparison}
We compare the training time per epoch and performance of HyLiFormer with recent state-of-the-art skeleton-based action recognition methods. The comparison focuses on the training time and accuracy of the skeleton unimodal data and the modular efficiency of the attention mechanism. Table \ref{tab:training time} shows the overall training time, module time, and accuracy comparison of the various methods for the NTU RGB+D and NTU RGB+D 120 datasets for the joint modality.

\subsection{Ablation Studies}
In order to further explore the validity of our HyLiFormer, the ablation experiments were carried on the X-Sub protocol of the NTU RGB+D 120 dataset. The detailed analysis is provided in Tables \ref{tab:2}, \ref{tab:3}, \ref{tab:4} to quantify the impact of different design choices on the model’s performance.

\textbf{Impact of the Curvature of Poincaré Model.}\thinspace In our framework, as inspired by  \citep{han2023flatten}, we model the linear attention mechanism within the Poincaré model in hyperbolic space. As described in the definition of the Poincaré model in Section \ref{sec:preliminary}, the curvature is a critical hyperparameter that influences the geometry and performance of the model. To explore the effect of different curvatures, we control for all other variables and conduct experiments with varying curvature values, as presented in Table \ref{tab:2}. Our results demonstrate that the framework achieves optimal performance when the curvature is set to -1, indicating that this specific curvature is most conducive to the model's ability to capture the structural properties of the data. Consequently, we adopt a curvature of -1 in our experiments to construct all Poincaré models, ensuring consistent and optimal performance throughout our evaluation.
\begin{table}[h]
\centering
\renewcommand\arraystretch{1.05}
\resizebox{0.4\textwidth}{!}{
\begin{tabular}{c|c}
\hline
Curvature of Poincare Model & Accuracy(\%) \\ \hline
$\kappa=-1$    & \textbf{87.5}        \\ \hline
$\kappa=-2$    & 87.0        \\ \hline
$\kappa=-3$    & 87.1        \\ \hline
\end{tabular}
}
\caption{Comparison of the performance of Poincaré models with different curvatures according to the X-Sub protocol on the NTU-RGB 120 dataset.}
\label{tab:2}
\end{table}

\textbf{Exploration of Existing Linear Attention Mechanisms Applied to Skeleton Data.}\thinspace As outlined in Section \ref{sec:preliminary} and Section \ref{sec:method}, we initially attempted to apply existing mainstream 
\begin{table}[h]
\centering
\renewcommand\arraystretch{1.05}
\begin{tabular}{c|c|c}
\hline
Method  & Accuracy(\%) & \makecell{Training Time\\(min/epoch)} \\ \hline
\begin{tabular}[c]{@{}c@{}}RWKV\\  \citep{peng2024eagle}\end{tabular}  & 86.8    &  5.5 \\ \hline
\begin{tabular}[c]{@{}c@{}}Mamba\\  \citep{gu2023mamba} \end{tabular} & 86.7    &  5.3  \\ \hline
Ours  & \textbf{87.5}     &   \textbf{3.7}  \\ \hline
\end{tabular}
\caption{Comparison of the effects of directly applying existing linear attention mechanisms to skeleton data.}
\label{tab:3}
\end{table}
linear attention mechanisms directly to skeleton data, with the expectation that they would offer a viable solution for skeleton-based human action recognition. However, our experiments revealed that the performance of these mechanisms was unsatisfactory. The results of our experiments are presented in Table \ref{tab:3}, which clearly illustrates the disparity in performance between these existing linear attention mechanisms and our proposed approach. In both terms of training time and final accuracy, the performance of the existing methods fell short.

\textbf{Exploration of Existing Attention Mechanisms in Hyperbolic Spaces} \thinspace As discussed in Section \ref{sec:preliminary} and Section \ref{sec:method}, we also investigated the application of existing mainstream linear attention mechanisms in hyperbolic space. 
\begin{table}[h]
\centering
\renewcommand\arraystretch{1.05}
\begin{tabular}{c|c|c}
\hline
Method & Accuracy(\%) & \makecell{Training Time \\ (min/epoch)} \\ \hline
\begin{tabular}[c]{@{}c@{}}Hyperbolic RWKV\\  \citep{peng2024eagle}\end{tabular}  & 87.2    &     18.2   \\ \hline
\begin{tabular}[c]{@{}c@{}}Hyperbolic Mamba\\  \citep{gu2023mamba} \end{tabular} & 87.0  &   22.4   \\ \hline
Ours  & \textbf{87.5} &    \textbf{3.7}       \\ \hline
\end{tabular}
\caption{Comparison of the effects of applying existing linear attention mechanisms to hyperbolic space.}
\label{tab:4}
\end{table}
However, we encountered challenges with the compatibility of certain Euclidean operations within the module when applied directly to hyperbolic space. Specifically, some Euclidean operations are not well-defined in the hyperbolic space. In this approach, we perform the undefined operations in Euclidean space and rely on the already defined operations in hyperbolic space. This results in a process of continually mapping the data back and forth between Euclidean space and hyperbolic space during computations. The computational results of this method are presented in Table \ref{tab:4}. Although we observed a slight improvement in the final performance, the trade-off was a significant increase in training time, highlighting the inefficiency of this approach.
\section{Conclusion}
This paper proposes HyLiFormer, a novel hyperbolic linear attention Transformer for skeleton-based human action recognition. By integrating hyperbolic geometry with linear attention, our model achieves efficient hierarchical and temporal modeling while reducing computational complexity from $\mathcal{O}(N^2)$ to $\mathcal{O}(N)$. Extensive experiments demonstrate that HyLiFormer maintains high recognition accuracy while significantly improving efficiency, making it well-suited for real-world applications.
\section*{Impact Statements}
This paper advances Machine Learning by improving efficiency in skeleton-based action recognition. Potential societal impacts include applications in surveillance, healthcare, and human-computer interaction. 

\bibliographystyle{ACM-Reference-Format}
\bibliography{references}


\begin{thebibliography}{37}


\ifx \showCODEN    \undefined \def \showCODEN     #1{\unskip}     \fi
\ifx \showISBNx    \undefined \def \showISBNx     #1{\unskip}     \fi
\ifx \showISBNxiii \undefined \def \showISBNxiii  #1{\unskip}     \fi
\ifx \showISSN     \undefined \def \showISSN      #1{\unskip}     \fi
\ifx \showLCCN     \undefined \def \showLCCN      #1{\unskip}     \fi
\ifx \shownote     \undefined \def \shownote      #1{#1}          \fi
\ifx \showarticletitle \undefined \def \showarticletitle #1{#1}   \fi
\ifx \showURL      \undefined \def \showURL       {\relax}        \fi
\providecommand\bibfield[2]{#2}
\providecommand\bibinfo[2]{#2}
\providecommand\natexlab[1]{#1}
\providecommand\showeprint[2][]{arXiv:#2}

\bibitem[Ahn et~al\mbox{.}(2023)]%
        {ahn2023star}
\bibfield{author}{\bibinfo{person}{Dasom Ahn}, \bibinfo{person}{Sangwon Kim}, \bibinfo{person}{Hyunsu Hong}, {and} \bibinfo{person}{Byoung~Chul Ko}.} \bibinfo{year}{2023}\natexlab{}.
\newblock \showarticletitle{Star-transformer: a spatio-temporal cross attention transformer for human action recognition}. In \bibinfo{booktitle}{\emph{Proceedings of the IEEE/CVF winter conference on applications of computer vision}}. \bibinfo{pages}{3330--3339}.
\newblock


\bibitem[Bai et~al\mbox{.}(2022)]%
        {bai2022hierarchical}
\bibfield{author}{\bibinfo{person}{Ruwen Bai}, \bibinfo{person}{Min Li}, \bibinfo{person}{Bo Meng}, \bibinfo{person}{Fengfa Li}, \bibinfo{person}{Miao Jiang}, \bibinfo{person}{Junxing Ren}, {and} \bibinfo{person}{Degang Sun}.} \bibinfo{year}{2022}\natexlab{}.
\newblock \showarticletitle{Hierarchical graph convolutional skeleton transformer for action recognition}. In \bibinfo{booktitle}{\emph{2022 IEEE International Conference on Multimedia and Expo (ICME)}}. IEEE, \bibinfo{pages}{01--06}.
\newblock


\bibitem[Balazevic et~al\mbox{.}(2019)]%
        {balazevic2019multi}
\bibfield{author}{\bibinfo{person}{Ivana Balazevic}, \bibinfo{person}{Carl Allen}, {and} \bibinfo{person}{Timothy Hospedales}.} \bibinfo{year}{2019}\natexlab{}.
\newblock \showarticletitle{Multi-relational poincar{\'e} graph embeddings}.
\newblock \bibinfo{journal}{\emph{Advances in Neural Information Processing Systems}}  \bibinfo{volume}{32} (\bibinfo{year}{2019}).
\newblock


\bibitem[Chen et~al\mbox{.}(2022)]%
        {chen2022hmanet}
\bibfield{author}{\bibinfo{person}{Jinghong Chen}, \bibinfo{person}{Chong Zhao}, \bibinfo{person}{Qicong Wang}, {and} \bibinfo{person}{Hongying Meng}.} \bibinfo{year}{2022}\natexlab{}.
\newblock \showarticletitle{Hmanet: Hyperbolic manifold aware network for skeleton-based action recognition}.
\newblock \bibinfo{journal}{\emph{IEEE Transactions on Cognitive and Developmental Systems}} \bibinfo{volume}{15}, \bibinfo{number}{2} (\bibinfo{year}{2022}), \bibinfo{pages}{602--614}.
\newblock


\bibitem[Chen et~al\mbox{.}(2024)]%
        {chen2024hyperbolic}
\bibfield{author}{\bibinfo{person}{Weize Chen}, \bibinfo{person}{Xu Han}, \bibinfo{person}{Yankai Lin}, \bibinfo{person}{Kaichen He}, \bibinfo{person}{Ruobing Xie}, \bibinfo{person}{Jie Zhou}, \bibinfo{person}{Zhiyuan Liu}, {and} \bibinfo{person}{Maosong Sun}.} \bibinfo{year}{2024}\natexlab{}.
\newblock \showarticletitle{Hyperbolic Pre-Trained Language Model}.
\newblock \bibinfo{journal}{\emph{IEEE/ACM Transactions on Audio, Speech, and Language Processing}} (\bibinfo{year}{2024}).
\newblock


\bibitem[Chen et~al\mbox{.}(2021)]%
        {chen2021channel}
\bibfield{author}{\bibinfo{person}{Yuxin Chen}, \bibinfo{person}{Ziqi Zhang}, \bibinfo{person}{Chunfeng Yuan}, \bibinfo{person}{Bing Li}, \bibinfo{person}{Ying Deng}, {and} \bibinfo{person}{Weiming Hu}.} \bibinfo{year}{2021}\natexlab{}.
\newblock \showarticletitle{Channel-wise topology refinement graph convolution for skeleton-based action recognition}. In \bibinfo{booktitle}{\emph{Proceedings of the IEEE/CVF international conference on computer vision}}. \bibinfo{pages}{13359--13368}.
\newblock


\bibitem[Cheng et~al\mbox{.}(2020)]%
        {cheng2020skeleton}
\bibfield{author}{\bibinfo{person}{Ke Cheng}, \bibinfo{person}{Yifan Zhang}, \bibinfo{person}{Xiangyu He}, \bibinfo{person}{Weihan Chen}, \bibinfo{person}{Jian Cheng}, {and} \bibinfo{person}{Hanqing Lu}.} \bibinfo{year}{2020}\natexlab{}.
\newblock \showarticletitle{Skeleton-based action recognition with shift graph convolutional network}. In \bibinfo{booktitle}{\emph{Proceedings of the IEEE/CVF conference on computer vision and pattern recognition}}. \bibinfo{pages}{183--192}.
\newblock


\bibitem[Ding et~al\mbox{.}(2023)]%
        {ding2023hyperformer}
\bibfield{author}{\bibinfo{person}{Kaize Ding}, \bibinfo{person}{Albert~Jiongqian Liang}, \bibinfo{person}{Bryan Perozzi}, \bibinfo{person}{Ting Chen}, \bibinfo{person}{Ruoxi Wang}, \bibinfo{person}{Lichan Hong}, \bibinfo{person}{Ed~H Chi}, \bibinfo{person}{Huan Liu}, {and} \bibinfo{person}{Derek~Zhiyuan Cheng}.} \bibinfo{year}{2023}\natexlab{}.
\newblock \showarticletitle{HyperFormer: Learning Expressive Sparse Feature Representations via Hypergraph Transformer}. In \bibinfo{booktitle}{\emph{Proceedings of the 46th International ACM SIGIR Conference on Research and Development in Information Retrieval}}. \bibinfo{pages}{2062--2066}.
\newblock


\bibitem[Do and Kim(2025)]%
        {do2025skateformer}
\bibfield{author}{\bibinfo{person}{Jeonghyeok Do} {and} \bibinfo{person}{Munchurl Kim}.} \bibinfo{year}{2025}\natexlab{}.
\newblock \showarticletitle{Skateformer: skeletal-temporal transformer for human action recognition}. In \bibinfo{booktitle}{\emph{European Conference on Computer Vision}}. Springer, \bibinfo{pages}{401--420}.
\newblock


\bibitem[Du et~al\mbox{.}(2015a)]%
        {du2015skeleton}
\bibfield{author}{\bibinfo{person}{Yong Du}, \bibinfo{person}{Yun Fu}, {and} \bibinfo{person}{Liang Wang}.} \bibinfo{year}{2015}\natexlab{a}.
\newblock \showarticletitle{Skeleton based action recognition with convolutional neural network}. In \bibinfo{booktitle}{\emph{2015 3rd IAPR Asian conference on pattern recognition (ACPR)}}. IEEE, \bibinfo{pages}{579--583}.
\newblock


\bibitem[Du et~al\mbox{.}(2015b)]%
        {du2015hierarchical}
\bibfield{author}{\bibinfo{person}{Yong Du}, \bibinfo{person}{Wei Wang}, {and} \bibinfo{person}{Liang Wang}.} \bibinfo{year}{2015}\natexlab{b}.
\newblock \showarticletitle{Hierarchical recurrent neural network for skeleton based action recognition}. In \bibinfo{booktitle}{\emph{Proceedings of the IEEE conference on computer vision and pattern recognition}}. \bibinfo{pages}{1110--1118}.
\newblock


\bibitem[Ermolov et~al\mbox{.}(2022)]%
        {ermolov2022hyperbolic}
\bibfield{author}{\bibinfo{person}{Aleksandr Ermolov}, \bibinfo{person}{Leyla Mirvakhabova}, \bibinfo{person}{Valentin Khrulkov}, \bibinfo{person}{Nicu Sebe}, {and} \bibinfo{person}{Ivan Oseledets}.} \bibinfo{year}{2022}\natexlab{}.
\newblock \showarticletitle{Hyperbolic vision transformers: Combining improvements in metric learning}. In \bibinfo{booktitle}{\emph{Proceedings of the IEEE/CVF Conference on Computer Vision and Pattern Recognition}}. \bibinfo{pages}{7409--7419}.
\newblock


\bibitem[Ganea et~al\mbox{.}(2018)]%
        {ganea2018hyperbolic}
\bibfield{author}{\bibinfo{person}{Octavian Ganea}, \bibinfo{person}{Gary B{\'e}cigneul}, {and} \bibinfo{person}{Thomas Hofmann}.} \bibinfo{year}{2018}\natexlab{}.
\newblock \showarticletitle{Hyperbolic neural networks}.
\newblock \bibinfo{journal}{\emph{Advances in neural information processing systems}}  \bibinfo{volume}{31} (\bibinfo{year}{2018}).
\newblock


\bibitem[Gu and Dao(2023)]%
        {gu2023mamba}
\bibfield{author}{\bibinfo{person}{Albert Gu} {and} \bibinfo{person}{Tri Dao}.} \bibinfo{year}{2023}\natexlab{}.
\newblock \showarticletitle{Mamba: Linear-time sequence modeling with selective state spaces}.
\newblock \bibinfo{journal}{\emph{arXiv preprint arXiv:2312.00752}} (\bibinfo{year}{2023}).
\newblock


\bibitem[Han et~al\mbox{.}(2023)]%
        {han2023flatten}
\bibfield{author}{\bibinfo{person}{Dongchen Han}, \bibinfo{person}{Xuran Pan}, \bibinfo{person}{Yizeng Han}, \bibinfo{person}{Shiji Song}, {and} \bibinfo{person}{Gao Huang}.} \bibinfo{year}{2023}\natexlab{}.
\newblock \showarticletitle{Flatten transformer: Vision transformer using focused linear attention}. In \bibinfo{booktitle}{\emph{Proceedings of the IEEE/CVF international conference on computer vision}}. \bibinfo{pages}{5961--5971}.
\newblock


\bibitem[Huang et~al\mbox{.}(2023)]%
        {huang2023hyperbolic}
\bibfield{author}{\bibinfo{person}{Wenkai Huang}, \bibinfo{person}{Yujia Yu}, \bibinfo{person}{Haizhou Xu}, \bibinfo{person}{Zhiwen Su}, {and} \bibinfo{person}{Yu Wu}.} \bibinfo{year}{2023}\natexlab{}.
\newblock \showarticletitle{Hyperbolic music transformer for structured music generation}.
\newblock \bibinfo{journal}{\emph{IEEE Access}}  \bibinfo{volume}{11} (\bibinfo{year}{2023}), \bibinfo{pages}{26893--26905}.
\newblock


\bibitem[Li et~al\mbox{.}(2017a)]%
        {li2017joint}
\bibfield{author}{\bibinfo{person}{Chuankun Li}, \bibinfo{person}{Yonghong Hou}, \bibinfo{person}{Pichao Wang}, {and} \bibinfo{person}{Wanqing Li}.} \bibinfo{year}{2017}\natexlab{a}.
\newblock \showarticletitle{Joint distance maps based action recognition with convolutional neural networks}.
\newblock \bibinfo{journal}{\emph{IEEE Signal Processing Letters}} \bibinfo{volume}{24}, \bibinfo{number}{5} (\bibinfo{year}{2017}), \bibinfo{pages}{624--628}.
\newblock


\bibitem[Li et~al\mbox{.}(2017b)]%
        {li2017skeleton}
\bibfield{author}{\bibinfo{person}{Chuankun Li}, \bibinfo{person}{Pichao Wang}, \bibinfo{person}{Shuang Wang}, \bibinfo{person}{Yonghong Hou}, {and} \bibinfo{person}{Wanqing Li}.} \bibinfo{year}{2017}\natexlab{b}.
\newblock \showarticletitle{Skeleton-based action recognition using LSTM and CNN}. In \bibinfo{booktitle}{\emph{2017 IEEE International conference on multimedia \& expo workshops (ICMEW)}}. IEEE, \bibinfo{pages}{585--590}.
\newblock


\bibitem[Li et~al\mbox{.}(2017c)]%
        {li2017adaptive}
\bibfield{author}{\bibinfo{person}{Wenbo Li}, \bibinfo{person}{Longyin Wen}, \bibinfo{person}{Ming-Ching Chang}, \bibinfo{person}{Ser Nam~Lim}, {and} \bibinfo{person}{Siwei Lyu}.} \bibinfo{year}{2017}\natexlab{c}.
\newblock \showarticletitle{Adaptive RNN tree for large-scale human action recognition}. In \bibinfo{booktitle}{\emph{Proceedings of the IEEE international conference on computer vision}}. \bibinfo{pages}{1444--1452}.
\newblock


\bibitem[Parmar et~al\mbox{.}(2018)]%
        {parmar2018image}
\bibfield{author}{\bibinfo{person}{Niki Parmar}, \bibinfo{person}{Ashish Vaswani}, \bibinfo{person}{Jakob Uszkoreit}, \bibinfo{person}{Lukasz Kaiser}, \bibinfo{person}{Noam Shazeer}, \bibinfo{person}{Alexander Ku}, {and} \bibinfo{person}{Dustin Tran}.} \bibinfo{year}{2018}\natexlab{}.
\newblock \showarticletitle{Image transformer}. In \bibinfo{booktitle}{\emph{International conference on machine learning}}. PMLR, \bibinfo{pages}{4055--4064}.
\newblock


\bibitem[Peng et~al\mbox{.}(2024)]%
        {peng2024eagle}
\bibfield{author}{\bibinfo{person}{Bo Peng}, \bibinfo{person}{Daniel Goldstein}, \bibinfo{person}{Quentin Anthony}, \bibinfo{person}{Alon Albalak}, \bibinfo{person}{Eric Alcaide}, \bibinfo{person}{Stella Biderman}, \bibinfo{person}{Eugene Cheah}, \bibinfo{person}{Xingjian Du}, \bibinfo{person}{Teddy Ferdinan}, \bibinfo{person}{Haowen Hou}, {et~al\mbox{.}}} \bibinfo{year}{2024}\natexlab{}.
\newblock \showarticletitle{Eagle and finch: Rwkv with matrix-valued states and dynamic recurrence}.
\newblock \bibinfo{journal}{\emph{arXiv preprint arXiv:2404.05892}} (\bibinfo{year}{2024}).
\newblock


\bibitem[Plizzari et~al\mbox{.}(2021a)]%
        {plizzari2021skeleton}
\bibfield{author}{\bibinfo{person}{Chiara Plizzari}, \bibinfo{person}{Marco Cannici}, {and} \bibinfo{person}{Matteo Matteucci}.} \bibinfo{year}{2021}\natexlab{a}.
\newblock \showarticletitle{Skeleton-based action recognition via spatial and temporal transformer networks}.
\newblock \bibinfo{journal}{\emph{Computer Vision and Image Understanding}}  \bibinfo{volume}{208} (\bibinfo{year}{2021}), \bibinfo{pages}{103219}.
\newblock


\bibitem[Plizzari et~al\mbox{.}(2021b)]%
        {plizzari2021spatial}
\bibfield{author}{\bibinfo{person}{Chiara Plizzari}, \bibinfo{person}{Marco Cannici}, {and} \bibinfo{person}{Matteo Matteucci}.} \bibinfo{year}{2021}\natexlab{b}.
\newblock \showarticletitle{Spatial temporal transformer network for skeleton-based action recognition}. In \bibinfo{booktitle}{\emph{Pattern recognition. ICPR international workshops and challenges: virtual event, January 10--15, 2021, Proceedings, Part III}}. Springer, \bibinfo{pages}{694--701}.
\newblock


\bibitem[Qiu et~al\mbox{.}(2022)]%
        {qiu2022spatio}
\bibfield{author}{\bibinfo{person}{Helei Qiu}, \bibinfo{person}{Biao Hou}, \bibinfo{person}{Bo Ren}, {and} \bibinfo{person}{Xiaohua Zhang}.} \bibinfo{year}{2022}\natexlab{}.
\newblock \showarticletitle{Spatio-temporal tuples transformer for skeleton-based action recognition}.
\newblock \bibinfo{journal}{\emph{arXiv preprint arXiv:2201.02849}} (\bibinfo{year}{2022}).
\newblock


\bibitem[Qu et~al\mbox{.}(2024)]%
        {qu2024llms}
\bibfield{author}{\bibinfo{person}{Haoxuan Qu}, \bibinfo{person}{Yujun Cai}, {and} \bibinfo{person}{Jun Liu}.} \bibinfo{year}{2024}\natexlab{}.
\newblock \showarticletitle{Llms are good action recognizers}. In \bibinfo{booktitle}{\emph{Proceedings of the IEEE/CVF Conference on Computer Vision and Pattern Recognition}}. \bibinfo{pages}{18395--18406}.
\newblock


\bibitem[Ren et~al\mbox{.}(2024)]%
        {ren2024survey}
\bibfield{author}{\bibinfo{person}{Bin Ren}, \bibinfo{person}{Mengyuan Liu}, \bibinfo{person}{Runwei Ding}, {and} \bibinfo{person}{Hong Liu}.} \bibinfo{year}{2024}\natexlab{}.
\newblock \showarticletitle{A survey on 3d skeleton-based action recognition using learning method}.
\newblock \bibinfo{journal}{\emph{Cyborg and Bionic Systems}}  \bibinfo{volume}{5} (\bibinfo{year}{2024}), \bibinfo{pages}{0100}.
\newblock


\bibitem[Tsai et~al\mbox{.}(2019)]%
        {tsai2019transformer}
\bibfield{author}{\bibinfo{person}{Yao-Hung~Hubert Tsai}, \bibinfo{person}{Shaojie Bai}, \bibinfo{person}{Makoto Yamada}, \bibinfo{person}{Louis-Philippe Morency}, {and} \bibinfo{person}{Ruslan Salakhutdinov}.} \bibinfo{year}{2019}\natexlab{}.
\newblock \showarticletitle{Transformer dissection: a unified understanding of transformer's attention via the lens of kernel}.
\newblock \bibinfo{journal}{\emph{arXiv preprint arXiv:1908.11775}} (\bibinfo{year}{2019}).
\newblock


\bibitem[Vaswani(2017)]%
        {vaswani2017attention}
\bibfield{author}{\bibinfo{person}{A Vaswani}.} \bibinfo{year}{2017}\natexlab{}.
\newblock \showarticletitle{Attention is all you need}.
\newblock \bibinfo{journal}{\emph{Advances in Neural Information Processing Systems}} (\bibinfo{year}{2017}).
\newblock


\bibitem[Wang et~al\mbox{.}(2018)]%
        {wang2018action}
\bibfield{author}{\bibinfo{person}{Pichao Wang}, \bibinfo{person}{Wanqing Li}, \bibinfo{person}{Chuankun Li}, {and} \bibinfo{person}{Yonghong Hou}.} \bibinfo{year}{2018}\natexlab{}.
\newblock \showarticletitle{Action recognition based on joint trajectory maps with convolutional neural networks}.
\newblock \bibinfo{journal}{\emph{Knowledge-Based Systems}}  \bibinfo{volume}{158} (\bibinfo{year}{2018}), \bibinfo{pages}{43--53}.
\newblock


\bibitem[Wu et~al\mbox{.}(2024)]%
        {wu2024frequency}
\bibfield{author}{\bibinfo{person}{Wenhan Wu}, \bibinfo{person}{Ce Zheng}, \bibinfo{person}{Zihao Yang}, \bibinfo{person}{Chen Chen}, \bibinfo{person}{Srijan Das}, {and} \bibinfo{person}{Aidong Lu}.} \bibinfo{year}{2024}\natexlab{}.
\newblock \showarticletitle{Frequency Guidance Matters: Skeletal Action Recognition by Frequency-Aware Mixed Transformer}. In \bibinfo{booktitle}{\emph{Proceedings of the 32nd ACM International Conference on Multimedia}}. \bibinfo{pages}{4660--4669}.
\newblock


\bibitem[Yan et~al\mbox{.}(2018)]%
        {yan2018spatial}
\bibfield{author}{\bibinfo{person}{Sijie Yan}, \bibinfo{person}{Yuanjun Xiong}, {and} \bibinfo{person}{Dahua Lin}.} \bibinfo{year}{2018}\natexlab{}.
\newblock \showarticletitle{Spatial temporal graph convolutional networks for skeleton-based action recognition}. In \bibinfo{booktitle}{\emph{Proceedings of the AAAI conference on artificial intelligence}}, Vol.~\bibinfo{volume}{32}.
\newblock


\bibitem[Yang et~al\mbox{.}(2024)]%
        {yang2024hypformer}
\bibfield{author}{\bibinfo{person}{Menglin Yang}, \bibinfo{person}{Harshit Verma}, \bibinfo{person}{Delvin~Ce Zhang}, \bibinfo{person}{Jiahong Liu}, \bibinfo{person}{Irwin King}, {and} \bibinfo{person}{Rex Ying}.} \bibinfo{year}{2024}\natexlab{}.
\newblock \showarticletitle{Hypformer: Exploring efficient transformer fully in hyperbolic space}. In \bibinfo{booktitle}{\emph{Proceedings of the 30th ACM SIGKDD Conference on Knowledge Discovery and Data Mining}}. \bibinfo{pages}{3770--3781}.
\newblock


\bibitem[Zhang and Gao(2021)]%
        {zhang2021hype}
\bibfield{author}{\bibinfo{person}{Chengkun Zhang} {and} \bibinfo{person}{Junbin Gao}.} \bibinfo{year}{2021}\natexlab{}.
\newblock \showarticletitle{Hype-han: Hyperbolic hierarchical attention network for semantic embedding}. In \bibinfo{booktitle}{\emph{Proceedings of the Twenty-Ninth International Conference on International Joint Conferences on Artificial Intelligence}}. \bibinfo{pages}{3990--3996}.
\newblock


\bibitem[Zhang et~al\mbox{.}(2022)]%
        {zhang2022zoom}
\bibfield{author}{\bibinfo{person}{Jiaxu Zhang}, \bibinfo{person}{Yifan Jia}, \bibinfo{person}{Wei Xie}, {and} \bibinfo{person}{Zhigang Tu}.} \bibinfo{year}{2022}\natexlab{}.
\newblock \showarticletitle{Zoom transformer for skeleton-based group activity recognition}.
\newblock \bibinfo{journal}{\emph{IEEE Transactions on Circuits and Systems for Video Technology}} \bibinfo{volume}{32}, \bibinfo{number}{12} (\bibinfo{year}{2022}), \bibinfo{pages}{8646--8659}.
\newblock


\bibitem[Zhang et~al\mbox{.}(2021)]%
        {zhang2021stst}
\bibfield{author}{\bibinfo{person}{Yuhan Zhang}, \bibinfo{person}{Bo Wu}, \bibinfo{person}{Wen Li}, \bibinfo{person}{Lixin Duan}, {and} \bibinfo{person}{Chuang Gan}.} \bibinfo{year}{2021}\natexlab{}.
\newblock \showarticletitle{STST: Spatial-temporal specialized transformer for skeleton-based action recognition}. In \bibinfo{booktitle}{\emph{Proceedings of the 29th ACM International Conference on Multimedia}}. \bibinfo{pages}{3229--3237}.
\newblock


\bibitem[Zhao et~al\mbox{.}(2021)]%
        {zhao2021point}
\bibfield{author}{\bibinfo{person}{Hengshuang Zhao}, \bibinfo{person}{Li Jiang}, \bibinfo{person}{Jiaya Jia}, \bibinfo{person}{Philip~HS Torr}, {and} \bibinfo{person}{Vladlen Koltun}.} \bibinfo{year}{2021}\natexlab{}.
\newblock \showarticletitle{Point transformer}. In \bibinfo{booktitle}{\emph{Proceedings of the IEEE/CVF international conference on computer vision}}. \bibinfo{pages}{16259--16268}.
\newblock


\bibitem[Zhou et~al\mbox{.}(2024)]%
        {zhou2024blockgcn}
\bibfield{author}{\bibinfo{person}{Yuxuan Zhou}, \bibinfo{person}{Xudong Yan}, \bibinfo{person}{Zhi-Qi Cheng}, \bibinfo{person}{Yan Yan}, \bibinfo{person}{Qi Dai}, {and} \bibinfo{person}{Xian-Sheng Hua}.} \bibinfo{year}{2024}\natexlab{}.
\newblock \showarticletitle{BlockGCN: Redefine Topology Awareness for Skeleton-Based Action Recognition}. In \bibinfo{booktitle}{\emph{Proceedings of the IEEE/CVF Conference on Computer Vision and Pattern Recognition}}. \bibinfo{pages}{2049--2058}.
\newblock


\end{thebibliography}
\end{document}